%% file: main.tex
\documentclass[12pt,letterpaper]{article}
\usepackage[margin=1in]{geometry}
\usepackage[utf8]{inputenc}
\usepackage{amsmath,amsfonts,amssymb}
\usepackage{mleftright,mathtools}
\DeclareMathAlphabet\mathbfcal{OMS}{cmsy}{b}{n}
\usepackage{amsfonts}
\usepackage{amssymb}
\usepackage{graphicx}
\usepackage[linesnumbered,ruled]{algorithm2e} 
\usepackage{float}
\usepackage{nicefrac}
\usepackage{mathrsfs}
\usepackage{pdfpages}
\usepackage{physics}
\usepackage{amsthm}
\usepackage{natbib}
\usepackage{multirow}
\usepackage{booktabs}
\usepackage{url}
\usepackage{hyperref}
\usepackage{placeins}
\usepackage{dsfont}
\usepackage{bm}
\usepackage{siunitx}
\usepackage{setspace}

\newtheorem{theorem}{Theorem}

\newtheorem{corollary}{Corollary}

\usepackage[dvipsnames]{xcolor}
\input{definitions.tex}

\begin{document}
\pagenumbering{gobble}
\setlength{\abovedisplayskip}{0.15cm}
\setlength{\belowdisplayskip}{0.15cm}
\pagestyle{empty}

\begin{titlepage}
\thispagestyle{empty}
\pagestyle{empty}
{\centering
\title{
\singlespacing \bf Bayesian meta-learning for modeling Alzheimer's disease progression}
\author{
\parbox{0.9\textwidth}{
\centering
Clara Hoffmann$^1$, Nadja Klein$^{1}$, for the Alzheimer’s Disease Neuroimaging Initiative$^{2}$\\[0.5em]
\footnotesize
$^1$Scientific Computing Center, Karlsruhe Institute of Technology, Germany\\[0.5em]
\vspace{-15pt}
\singlespacing
$^2$Data used in preparation of this article were obtained from the Alzheimer's Disease Neuroimaging Initiative (ADNI) database (adni.loni.usc.edu). As such, the investigators within the ADNI contributed to the design and implementation of ADNI and/or provided data but did not participate in the analysis or writing of this report. A complete listing of ADNI investigators can be found \href{ http://adni.loni.usc.edu/wpcontent/uploads/how_to_apply/ADNI_Acknowledgement_List.pdf}{here}.}}
\date{}
\maketitle }
\vspace{-20pt}
\begin{abstract}
\vspace{-12pt}
\singlespacing
Predicting whether an individual with Alzheimer's disease will experience mild or severe disease progression is essential for personalized treatment. Typically, practitioners seek to predict the distribution of a discrete disease score, conditional on an individual's current MRI volume and their historical disease trajectory. Classical statistical regression models and single-task neural networks are not well-suited for this purpose because fitting separate models is infeasible (since each individual typically has few observations), while ignoring individual-level correlation leads to poor generalization. Meta-learning, in contrast, provides a natural avenue to dynamically predict distributions without retraining and model nonlinear relationships between the outcome and covariates.
Motivated by this, we propose a Bayesian meta-learner that is trained on multiple individuals but tailors the predictive disease score distribution to each individual's historical data. 
Our model predicts on unseen individuals without retraining, scales linearly with the number of historical observations, and is guaranteed to be less overconfident when predicting long-term disease scores compared to its deterministic counterpart. On real-world data from the Alzheimer’s Disease Neuroimaging Initiative (ADNI) database, our model achieves performance competitive with both single-task models and deterministic meta-learners, while substantially improving performance when predicting long-term disease progression.
\end{abstract}
\vspace{-0.5cm}

\singlespacing
\noindent
{\bf Keywords}: Bayesian deep learning, biomedical image data, MRI volume, neural networks, last-layer inference\vspace{0.5cm}\\
 \noindent
{\footnotesize{{\bf Corresponding author}: Prof.~Dr.~Nadja Klein, Scientific Computing Center, Karlsruhe Institute of Technology, Zirkel 2, 76131 Karlsruhe, Germany, nadja.klein@kit.edu.}}

\end{titlepage}
\clearpage
\pagenumbering{arabic}
\pagestyle{plain}
\section{Introduction}
Alzheimer’s disease (AD) is the prevailing cause of dementia and a major global health challenge.  Predicting AD progression is critical for understanding the underlying causes of the disease, administering early treatment, and allowing individuals and their families to prepare for the future. Instead of predicting the entire future progression, practitioners are often interested in predicting the distribution of a discrete disease severity score for an individual at a given time point from the respective MRI volume. To account for the strong heterogeneity between disease trajectories, it is useful to condition this distribution on an individual's historical disease trajectory, consisting of past MRI volumes and associated scores. A particular desideratum for this distribution is that it is well-calibrated when predicting ``long-term'' disease progression, where a substantial temporal gap exists between the time point at which to predict and the historical disease data.

Simultaneously modeling this disease score distribution across multiple individuals is challenging for both statistical and deep learning models.
Classical statistical regression models are not well-suited because individuals typically have few, irregularly spaced observations, making separate regressions infeasible. Mixed-effect models account for the correlation of an individual over time via random effects \citep{MccSeaNeu2008}, but if new individuals have few observations, the random effects remain close to the population mean. Moreover, the MRI volumes are high-dimensional, leading to scalability issues for regression in general\footnote{For example, stacking a standard MRI volume with dimension $256 \times 256 \times 256$ results in a covariate vector with dimension $> 10^6$ and equally as many parameters to be estimated in a linear regression model.}. While tensor regression approaches partially address this, they generally do not accommodate arbitrary nonlinear effects. 
Neural networks, on the other hand, can in principle model many practically relevant functions up to arbitrary accuracy \citep{Hor1991}. Yet, they are typically constructed as \emph{single-task} models, which in our application, correspond to predicting a single disease score from a single MRI volume. In this formulation, they cannot account for irregular time series without further adaptation, need to be retrained when new observations become available, and tend to produce overconfident predictions. In particular, they often assign nearly all probability mass to a single class when the covariate values lie far beyond the ones in the training data \citep{HeiAndBit2019}. 
This issue arises from the widespread use of piecewise-linear activation functions in modern neural networks, such as Rectified Linear Units \citep[ReLU;][]{Aga2018}, and can be mitigated by performing Bayesian inference over (a subset of) the network parameters \citep{KriHeiHen2020}.

To address these shortcomings, we propose a Bayesian meta-learner that combines shared learning of the disease score based on a current time point and MRI volume with a Bayesian last layer, which adapts to the historical disease trajectory via a hypernetwork.
From a statistical perspective, this can be viewed as a Bayesian regression model in which both the coefficients and covariates are learned from the data, and the coefficient posterior is dynamically adapted to each individual during prediction. 

Overall, we make three main contributions. First, we theoretically establish that modeling long-term disease scores with deterministic ReLU classifiers leads to overconfidence both in single-task and meta-learning, and empirically verify that this can be mitigated by performing Bayesian inference over the last-layer weights. Second, we show that accounting for an individual's historical disease trajectory via meta-learning improves predictive accuracy compared to single-task models. Third, we demonstrate on real-world data from the Alzheimer’s Disease Neuroimaging Initiative \citep[ADNI;][]{PetAisBec2010} database that our Bayesian meta-learner is competitive with other established methods for modeling disease trajectories, while yielding better performance for predicting long-term disease progression.

The remainder of the paper is organized as follows. Section~\ref{sec:background} provides background on AD, the data used to perform the benchmark study, and relevant models for disease progression. Section~\ref{sec:methodology} details our Bayesian meta-learning approach. Section~\ref{sec:estimation} presents the theoretical results on overconfidence and validates them in a simulation study. Section~\ref{sec:application} contains a detailed analysis with the real-world ADNI data, and Section \ref{sec:discussion} concludes with a discussion. All code is available in the Supplementary Material.
\section{Background}\label{sec:background}
\subsection{Alzheimer's disease}
AD is a neurodegenerative disease characterized by loss of brain volume, also known as cerebral atrophy \citep{KilMosAlb1993, LehBauChi1994}. This atrophy can be measured non-invasively with T1-Weighted MRI volumes \citep{FisFitNob2018}. 
Detecting the onset of AD is particularly challenging because the disease evolves over decades, during which pathological changes occur long before clinical symptoms appear \citep{JacLowWei2009, BucMinZet2012}. 
Identifying mild and moderate AD is of particular interest for testing drugs that may prevent progression from mild cognitive impairment to AD dementia \citep{SpeJacAis2011}. Treatment should be administered early to those individuals who are likely to experience a severe trajectory. Therefore, it is a key objective to predict an individual’s future disease progression from limited available data.

Clinical progression of AD is commonly measured using cognitive and functional tests, which summarize disease severity as a scalar, discrete score. The recommended score \citep{CedJarHer2013} to model all AD stages is the Clinical Dementia Rating-Sum of Boxes \citep[CDR-SB;][]{BerMilSto1988}. The CDR-SB is based on the Clinical Dementia Rating \citep[CDR;][]{HugBerDan1982}, 
which measures performance across six domains (cognition: memory, orientation, judgment/problem solving; function: community affairs, home/hobbies, personal care) by means of an interview with the individual and a qualified companion. Summing the scores on each category delivers the CDR-SB. It is a discrete score ranging from 0 to 78, where higher scores indicate higher disease severity. 

\subsection{Data}
Our methodological developments are motivated by the need for reliable and efficient assessment of real-world disease trajectories from the ADNI database. This database is a large-scale longitudinal study that tracks biomarkers and MRI volumes in cognitively normal individuals (normal controls, NC), those with mild cognitive impairment (MCI), and individuals with AD\footnote{Up-to-date information can be found under \url{adni.loni.usc.edu}}. A key objective of ADNI is to quantify progression between these clinical stages using serial MRI,  biological markers, as well as clinical and neuropsychological assessments. 

For our analysis, we use MRI volumes together with the corresponding CDR-SB score. Because some CDR-SB values occur only rarely, we group them into three clinically meaningful categories (NC, MCI, AD) to ensure a sufficient sample size per class, following a similar setup to \citet{ObrWarCul2008}. Figure~\ref{fig:mri_progression} illustrates representative averaged MRI slices for the three clinical stages with CDR-SB scores of 0 (left; NC),  $> 1$ and $\leq 5$ (center; MCI), and $> 5$ (right; AD). 
\begin{figure}[ht!]
    \centering
    \includegraphics[width=.8\linewidth]{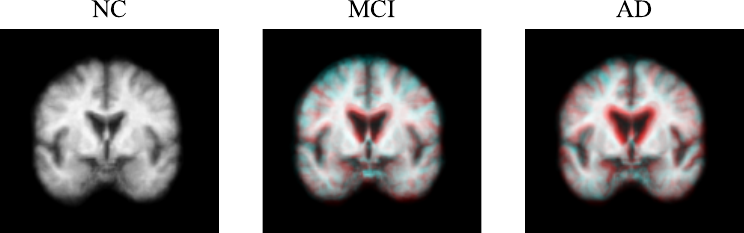}
    \caption{Averaged axial MRI slice (across $10$ individuals\protect\footnotemark) from the ADNI database for normal controls (left; NC), individuals with mild cognitive impairment (center; MCI), and individuals with Alzheimer's disease (right; AD). Red regions indicate brain atrophy, and blue regions indicate new brain tissue.}
    \label{fig:mri_progression}
\end{figure}
\footnotetext{Averaging across individuals is required due to ADNI’s Data Use Agreement,
which prohibits the publication of single-participant MRI slices.}

The images have been skull-stripped, intensity normalized, and projected to a standard coordinate space with the \emph{neural pre-processing} package \citep[NPPY;][]{HeWanSab2023}.
To highlight changes in brain matter, the MCI and AD slices are overlaid on the NC slice. Regions that are darker than the NC slice appear in red, indicating cerebral atrophy, while regions that are brighter appear in blue and highlight additional brain tissue.
Progressive cerebral atrophy is visible from NC to MCI to AD as indicated by the volume loss at the brain's center, where the ventricles (dark regions) are enlarged, and at the edges of the brain, where the brain's folds (cortical sulci) are widened \citep[see][for an introduction to cerebral atrophy interpretation]{BlinWei2021}.

\paragraph*{Train, validation, and test split}
For training, validation, and in-sample testing, we use data from the ADNI1 study phase covering the years 2005--2008. The in-sample test set is used to evaluate whether the models capture meaningful patterns within the timespan from which the training data is drawn. In contrast, to assess the generalization and uncertainty calibration for modeling long-term disease progression, we additionally use two out-of-sample test sets with recent data from the study phases ADNI3 (2016--2022) and ADNI4 (2023--2025). Table~\ref{tab:train_test_split} summarizes the number of observations for the train, validation, and test sets. 
\begin{table}[h!]
    \caption{Number of observations for the train, validation, and test sets.}
        \label{tab:train_test_split}
        \centering
        \small
        \begin{tabular}{lcc cc cc}
            \toprule
             \cmidrule(lr){2-4} \cmidrule(lr){5-6}
             & \multirow{2}{*}{\textbf{Training}} & \multirow{2}{*}{\textbf{Validation}}  &  \multicolumn{3}{c}{\textbf{Test}}  \\ 
             \cmidrule(lr){4-6}
             & & & \textbf{In-sample} & \multicolumn{2}{c}{\textbf{Out-of-sample}} \\
              \midrule
             \cmidrule(lr){4-6}
            Study phase & ADNI1 & ADNI1 & ADNI1 & ADNI3 & ADNI4 \\
            \midrule
            No. of observations & 942 & 243 & 257  & 92 & 11  \\
            No. of individuals & 242 & 59 & 76 & 42 & 11 \\
            \bottomrule
        \end{tabular}
    \end{table}
The eight-year gap between the training and out-of-sample period allows us to predict at time points that are far from the historical disease trajectory, and thereby highlight the overconfidence issue we seek to study.
For the out-of-sample test set, we include only ``rollover'' participants with at least one previous observation in ADNI1. The intermediate study phase, ADNI2 (2010--2016), is excluded because it requires additional processing to match the image format in ADNI1 and has a shorter temporal distance to the in-sample data. For reproducibility and in line with the ADNI guidelines, we report the retrieval dates, subject IDs, and additional processing steps for ADNI3 and ADNI4 in Supplementary Material A.

\subsection{Related work}\label{subsec:relatedwork}
Both deep learning and statistical regression models have been used for modeling disease progression, but existing methods often cannot process historical disease trajectories in parallel or lack the flexibility to model sufficiently nonlinear relationships between the outcome and covariates.

Deep learning methods for disease progression often focus on modeling temporal sequences or the latent disease state. Architectures for temporal data include recurrent neural networks \citep[RNNs;][]{Elm1990} and their gated variants, long short-term memory networks \citep[LSTMs;][]{HocSch1997}. Both models typically require ordered, regularly spaced observations and, without modification, can only make predictions into the future \citep{BayXiaZha2017, Zha2019}. Our proposed approach, on the other hand, can also interpolate between observed data. Deep state-space models \citep{AlaVan2019} offer better interpretability by learning structured representations of latent physiological states that drive disease progression. However, just like RNNs and LSTMs, they process data sequentially, resulting in long training times and high computational cost. In contrast, our proposed approach overcomes this limitation by processing the historical disease trajectory in parallel, while still leveraging individual-specific information.

Statistical regression models for disease progression mostly employ tensor regression, an extension of vector-based regression to multi-dimensional arrays. Recent developments allow partial sharing of regression coefficients across groups \citep{SuiXuLi2025}, and efficient estimation techniques have been proposed for neuroimaging data \citep{ZhouLiZhu2013}. However, tensor regression generally does not allow for arbitrary nonlinear mappings from the covariates to the outcome. In contrast, our approach preserves the interpretability and efficient inference of a Bayesian linear model, while providing the flexibility to capture nonlinear relationships between disease scores and MRI volumes.
Recent work has explored combining statistical models with deep neural networks. This includes neural networks with random effects \citep{SimRos2023}, which typically set the random-effect coefficient to zero when predicting on new individuals, which effectively removes the individual-level effects. In contrast, our Bayesian meta-learner can incorporate such effects even for unseen subjects.

\section{Methodology}\label{sec:methodology}
\subsection{Notation and objective}
To formally introduce the longitudinal data structure at hand, assume we observe each individual $i=1,\ldots,N$ at $T_i$ individual-specific time points.  Let  $Y_{i,t_{i,j}}^{(c)}$ denote the random variable for the discrete disease score at time point $t_{i,j}\in\lbrace t_{i,1},\ldots,t_{i,T_i}\rbrace$ with realization $y_{i,t_{i,j}}^{(c)}\in\lbrace 1,\ldots,K \rbrace$, where $K$ is the number of possible disease score categories. 
Let $x_{i,t_{i,j}}^{(c)}$ contain the MRI volume $\bar{x}_{i,t_{i,j}}^{(c)}$ at time point $t_{i,j}$ for individual $i$. Instead of the raw MRI volume, we can work directly with a suitable lower-dimensional embedding, so that we can assume $\bar{x}_{i,t_{i,j}}^{(c)} \in \Reals^{\dimimg}$ for some $\dimimg>1$ and $x_{i,t_{i,j}}^{(c)} = (\bar{x}_{i,t_{i,j}}^{(c) \top},t_{i,j})^\top \in \Reals^{D +1}$. Details on how to obtain the embedding will be given in Section \ref{sec:application}. 
All observed covariate-label pairs for individual $i$ are collected in the historical disease trajectory $\mathcal{D}_i^{(c)} = \lbrace(x_{i,t_{i,j}}^{(c)}, y_{i,t_{i,j}}^{(c)})\rbrace_{j=1}^{T_i}$. Our objective is to predict the conditional distribution of the disease score $Y_{i,t_{i, T_i + 1}}^{(t)}, \ldots, Y_{i,t_{i, T_i + H_i}}^{(t)}$ at new time points $t_{i,T_{i} +1}, \ldots, t_{i,T_{i}+H_i}$ and MRI volumes $\bar{x}_{i,t_{i,T_{i}+1}}^{(t)}, \ldots, \bar{x}_{i,t_{i,T_{i}+H_i}}^{(t)}$ 
\begin{align*}
Y_{i,t_{i,T_i+h_i}}^{(t)} \mid x_{i,t_{i,T_i + h_i}}^{(t)}, \D_i^{(c)}, \quad h_i = 1, \ldots, H_i,
\end{align*}
where $H_i > 0$ denotes the individual-specific prediction horizon.  The observed time points $t_{i,1}, \ldots, t_{i,T_i}$ and new time points $t_{i,T_i+1}, \ldots, t_{i, T_i+H_i}$ have no particular order, so that we may predict in the past, future, or in between observed data. This is a key advantage compared to LSTMs and RNNs (compare Section~\ref{subsec:relatedwork}), which can only predict on sequential data without further modification. Following common meta-learning terminology \citep{GarRoMa2018}, we refer to the observed disease trajectory  $\D_i^{(c)}$ as the \textit{context set} and the MRI volumes and time points for prediction $\D_i^{(t)} = \lbrace x_{i,t_{i,j}}^{(t)}\rbrace_{j=T_i +1}^{T_i + H_i}$ as the \textit{target set}. Jointly, the context and target sets form a so-called \textit{task} $\D_i = \D_i^{(c)} \cup \D_i^{(t)}$. 

\subsection{Meta-learning model}
In the following, we construct a meta-learner $f_{\theta}(x_{i,t_{i,j}}^{(t)}, \D_i^{(c)})$  with parameter $\theta\in\mathbb{R}^P$ which predicts the disease score distribution at any target covariate
$x^{(t)}\in\mathcal{X}$ conditional on the individual's history $\mathcal{D}_i^{(c)}\in\mathcal{H}$. More formally,
\[ f_\theta : \mathcal{X} \times \mathcal{H} \to (0,1)^K, \]
where
$\mathcal{X} = \mathbb{R}^{\dimimg +1}$
contains the MRI embedding and time point, and
$\mathcal{H} = \bigcup_{T \ge 1} \bigl(\mathcal{X} \times \mathcal{Y}\bigr)^T$ denotes the space of all finite individual-specific disease trajectories for $\mathcal{Y}=\lbrace 1,\ldots,K\rbrace$.  The meta-learner is only trained \textit{once} and can make predictions for new individuals and tasks without retraining.

\subsubsection{Architecture}\label{sec:architecture}
The architecture consists of three components: (1) a \emph{context encoder} that predicts individual-specific last-layer weights from the context set, (2) an \emph{aggregator} that maps the last-layer weights to a fixed-size matrix, and (3) a \emph{target encoder} that uses these weights to predict class probabilities at a given target covariate.

The context encoder $f_{\vartheta}^{(c)}: \mathcal{X}\times\mathcal{Y}\to \Reals^{ \numclasses \times \dimlastlayerweights }$ with parameters $\vartheta$ is a hypernetwork\footnote{A hypernetwork is a neural network that predicts the weights of another neural network, and we refer to \citet{HaDaiLe2016, ChaZhoLu2024} for an overview.} which predicts a weight matrix $W_{i,t_{i,j}}$ from each context point $(x_{i,t_{i,j}}^{(c)},y_{i,t_{i,j}}^{(c)}) \in \D_i^{(c)}$ via
\begin{align}\label{eq:context_encoder}
W_{i,t_{i,j}} & = f_{\vartheta}^{(c)}(x_{i, t_{i,j}}^{(c)}, y_{i,t_{i,j}}^{(c)}).
\end{align}
The weight matrices are then aggregated into a fixed-size matrix $W_i$ to be used in the last layer of the target encoder
\begin{align}\label{eq:aggregator}
    W_i = \bigoplus_{j=1}^{T_i} W_{i,t_{i,j}},
\end{align}
where $\bigoplus$ denotes an aggregation operator. The aggregation operator must be commutative to ensure permutation-invariance. Possible choices are the minimum, maximum, and we use the mean as the aggregator throughout\footnote{Given the progressive nature of AD, one could also assign greater weight to later time points. However, we decided against this to preserve the model's ability to interpolate.}.

The target encoder is a feedforward neural network denoted by $f_{\psi}^{(t)}: \mathcal{X}\times\Reals^{\numclasses  \times \dimlastlayerweights} \to (0,1)^\numclasses$ that predicts the class probabilities for the disease score given the target covariates $x_{i,t_{i,j}}^{(t)}$ and the aggregated weight matrices $W_i$.
The last layer of the target encoder at a target covariate $x_{i,t_{i,j}}^{(t)}$ is linear, followed by a softmax activation
\begin{align}\label{eq:target_encoder}
    f_{\psi}^{(t)}(x_{i,t_{i,j}}^{(t)}, W_i) &=
    (\mathbb{P}(Y_{i,t_{i,j}}^{(t)} = 1 \mid x_{i, t_{i,j}}^{(t)}, \D_i^{(c)}), \ldots,  \mathbb{P}(Y_{i,t_{i,j}}^{(t)} = \numclasses \mid x_{i,t_{i,j}}^{(t)}, \D_i^{(c)}))^\top  \nonumber \\
    &= a(W_i \phi^{(t)}(x_{i,t_{i,j}}^{(t)})),
\end{align}
where $\phi^{(t)}(x_{i,t_{i,j}}^{(t)})\in\mathbb{R}^{F}$ is the target embedding. The target embedding is a function of $\psi$, but we suppress this dependence for better readability. 
 In addition,
$a:\mathbb{R}^{\numclasses}\to(0,1)^{\numclasses}$ is the softmax activation, where
for a vector $v = (v_1, \ldots, v_{\numclasses})^\top \in \Reals^{\numclasses}$ the softmax activation vector $a(v) = (a_1(v), \ldots, a_\numclasses(v))^\top$ contains elements 
\begin{align*}
    a_k(v) = \frac{\exp(v_k)}{\sum_{l=1}^\numclasses \exp(v_l)}, \quad  \, k = 1, \ldots, \numclasses.
\end{align*} 
Combining all components, the meta-learner has parameters $\theta=\lbrace\vartheta,\psi\rbrace$ and is formally defined as
\begin{align}
f_\theta(x^{(t)}, \mathcal{D}_i^{(c)})\;=\; f_{\psi}^{(t)}\!\left( x^{(t)},\; \bigoplus_{(x',y')\in\mathcal{D}_i^{(c)}}f_{\vartheta}^{(c)}(x',y')\right).
\end{align}
Because the last-layer weights $W_i$ are obtained by forward passing $\mathcal{D}_i^{(c)}$ through the context encoder, the meta-learner can adapt to new individuals or tasks without retraining. 
While we do not impose any architectural constraints on the context encoder, we assume the target encoder has linear hidden layers with piecewise linear activation functions, such as ReLU. Such activation functions remain the standard choice as they make training fast and stable. Later in Section~\ref{sec:estimation}, we show that this assumption is a sufficient condition for the overconfidence issue of long-term predictions.
Our architecture employs standard elements of meta-learning models and follows a design similar to established architectures such as Conditional Neural Processes \citep[CNPs;][]{GarRoMa2018} and Conditional Neural Adaptive Processes (CNAP; \citealp{ReqGorBron2019}). Figure~\ref{fig:architecture} illustrates our proposed architecture.
\begin{figure}[H]
    \centering
    \includegraphics[width=.85\linewidth]{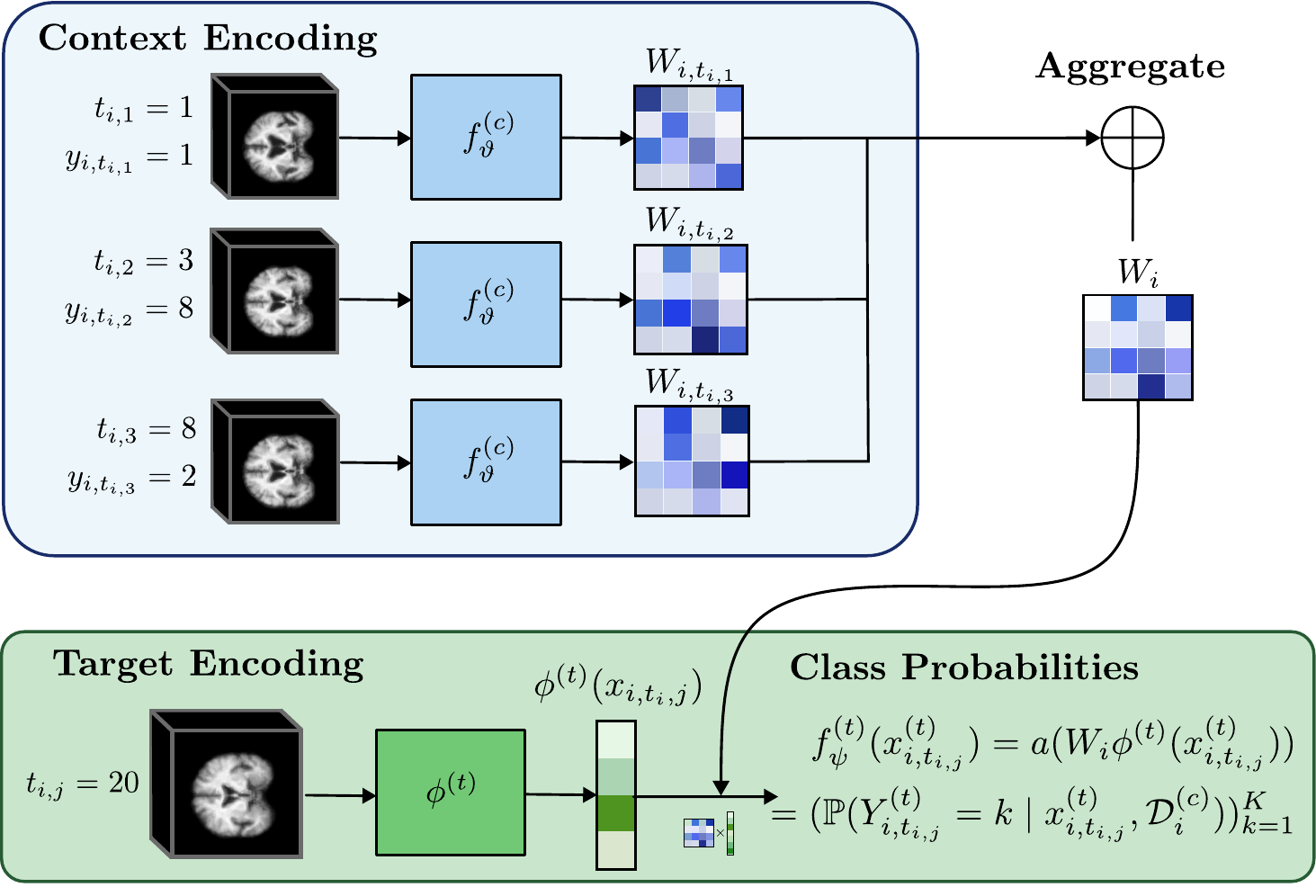}
    \caption{Illustration of the meta-learning architecture.}
    \label{fig:architecture}
\end{figure}

\subsubsection{Training}
We train the meta-learner $f_{\theta}$ to predict $y_{i,t_{i,j}}^{(c)}$ conditioned on the covariates $x_{i,t_{i,j}}^{(c)}$, given a randomly sampled subset of the context set of size $T_i^{(\tilde{c})}$, denoted as $\D_i^{(\tilde{c})}$, such that $\D_i^{(\tilde{c})} \subset \D_i^{(c)}$. This approach is known as \textit{episodic training} \citep{VinBluLil2016}.
The model parameters $\theta$ are chosen to maximize the
conditional log-likelihood of the full context set given the sampled subset
\[ \argmax_{\theta \in \mathbb{R}^P}
\;\mathbb{E}_{p(\mathcal{D}^{(\tilde{c})})}
\Bigl[\log \mathbb{P}\bigl(\mathcal{D}^{(c)} \mid \mathcal{D}^{(\tilde{c})}, \theta\bigr)\Bigr],\]
where  $ \D^{(c)} = \bigcup_{i=1}^N \D_i^{(c)} $ and  $ \D^{(\tilde{c})} = \bigcup_{i=1}^N \D_i^{(\tilde{c})} $ collects the data across all individuals and $\D^{(\tilde{c})}$ is distributed with probability density $p(\D^{(\tilde{c})})$.
In practice, the expectation over  $p(\D^{(\tilde{c})})$ is approximated via Monte Carlo integration, where $p(\D^{(\tilde{c})})$ is defined implicitly through a two-step sampling scheme. First, the number of observations in $\mathcal{D}^{(\tilde{c})}$ is drawn uniformly from a predefined interval. Then, the corresponding number of elements is drawn randomly and without replacement from $\mathcal{D}^{(c)}$. 
Learning can be performed by empirical risk minimization
\begin{align}\label{eq:mle}
    \hat{\theta} = \argmin_{\theta\in\mathbb{R}^P} \;-\, \mathbb{E}_{p(\mathcal{D}^{(\tilde{c})})}\left[\sum_{i=1}^{N}\sum_{j=1}^{T_i}\log \mathbb{P}\bigl(Y_{i,t_{i,j}}^{(c)} = y_{i,t_{i,j}}^{(c)}\mid x_{i,t_{i,j}}^{(c)},\mathcal{D}_i^{(\tilde{c})},\theta
\bigr)\right],
\end{align}
assuming conditional independence of observations given the subset of the context set and parameters, i.e.,  $Y_{i,t_{i,j}}^{(c)} \perp Y_{m,t_{m,l}}^{(c)} \mid \D^{(\tilde{c})}, \theta$ for $(i,j) \neq (m,l)$. We perform minimization using stochastic gradient descent (SGD).

The objective from Eq.~\eqref{eq:mle} yields a maximum-likelihood estimate (MLE). Similarly, a maximum-a-posteriori (MAP) estimate can be obtained by adding a weight regularization term $\lambda \|\theta\|_2^2$ to the objective in Eq.~\eqref{eq:mle}, where $\lambda > 0$. This corresponds to assuming a Gaussian prior on each parameter with variance inversely proportional to $\lambda$. In practice, we train all models without weight regularization and explain this choice in Section~\ref{sec:pred_inf}.
Algorithm~\ref{alg:training} lists the steps for obtaining a point (MLE) estimate $\hat{\theta}$ for the meta-learner parameters. 

\begin{algorithm}[H]
\caption{Training}\label{alg:training}
\SetKwInOut{Input}{Input}
\SetKwInOut{Output}{Output}
\Input{Context data $\mathcal{D}_1^{(c)},\ldots,\mathcal{D}_N^{(c)}$, initial parameters $\theta^{[0]}$, scalar learning rate $\eta$.}
\Output{Point estimate $\hat\theta=\lbrace\hat\vartheta,\hat\psi\rbrace$.}

Initialize $\theta \leftarrow \theta^{[0]}$ and set $s=0$. \\
\While{loss has not converged}{
    $s \leftarrow s+1$.\\
    Sample a minibatch $B \subset \{1,\ldots, N\}$. \\
    \ForEach{$i \in B$}{
        Sample a random subset $\mathcal{D}_i^{(\tilde{c})} \subset \mathcal{D}_i^{(c)}$. \\
        Compute the task-level negative log-likelihood:
        $
        \ell_i^{[s]}
        \coloneqq
        -\frac{1}{T_i}
        \sum_{j=1}^{T_i}
        \log
        \mathbb{P}\bigl(
        Y_{i,t_{i,j}}^{(c)} = y_{i,t_{i,j}}^{(c)}
        \mid
        x_{i,t_{i,j}}^{(c)},\mathcal{D}_i^{(\tilde{c})},\theta^{[s-1]}
        \bigr).
        $
    }
    Update parameters via SGD:
    $
    \theta^{[s]}
        \leftarrow
    \theta^{[s-1]}
    - \eta \,
      \nabla_{\theta}
      \frac{1}{|B|}
      \sum_{i\in B} \ell_i^{[s]}\big|_{\theta = \theta^{[s-1]}}.
    $
}
\Return{$\hat\theta=\theta^{[s]}$.}
\end{algorithm}
Once $\hat{\theta}$ has been obtained, a point estimate $\widehat{W}_i$ of the individual-specific weights can be obtained by passing the context set $\D_i^{(c)}$ through the context encoder and aggregator.

\subsection{Predictive inference}\label{sec:pred_inf}
We now present two approaches for obtaining predictions with the meta-learning architecture, which we henceforth refer to as the deterministic and Bayesian meta-learners. The deterministic meta-learner employs the point estimate $\widehat{W}_i$ for prediction, while the Bayesian meta-learner marginalizes over the full task-specific posterior $p(W_i \mid \D_i^{(c)})$ for each individual.
We introduce both approaches in detail, covering both point and distributional predictions. While we describe prediction for individuals $i = 1, \ldots, N$, the approaches can also be used to directly predict on new individuals. 

\subsubsection{Deterministic prediction}\label{sec:det_pred}
The deterministic meta-learner uses the point estimate $\widehat{W}_i$ of the individual-specific last-layer weight matrix without accounting for weight uncertainty.
The predictive probability for the deterministic meta-learner at class $k\in\{1,\ldots,K\}$ at a target covariate $x_{i,t_{i,j}}^{(t)}$ is
\begin{align}\label{eq:det_predictor}
    \hat{p}_{i,t_{i,j}}^{\mathrm{Det}}(k)
    = a_k\!\bigl(\,\widehat{W}_i \phi^{(t)}(x_{i,t_{i,j}}^{(t)})\bigr),
\end{align}
which provides an estimate of $\mathbb{P}(Y_{i,t_{i,j}}^{(t)} = k \mid x_{i,t_{i,j}}^{(t)},\mathcal{D}_i^{(c)})$ for individual $i=1,\ldots,N$ and time point $j=T_i +1, \ldots, T_i + H_i$.
The corresponding point prediction is obtained by selecting the class with the highest predicted probability, $\hat{y}_{i,{t_{i,j}}}^{\text{Det}} = \argmax_{k \in \{1,\ldots, K\}} \hat{p}_{i,{t_{i,j}}}^{\text{Det}}(k)$.

\subsubsection{Bayesian inference}\label{sec:bayes_pred}
As we will illustrate theoretically and empirically in Section~\ref{sec:estimation}, the target encoder $f_{\psi}^{(t)}$ will realize arbitrarily high confidence when predicting at time points far away from those in the context set. To mitigate this effect, we treat $W_i$ as a random variable and integrate over its posterior $p(W_i \mid \D_i^{(c)})$. This approach is known as last-layer inference \citep{SnoRipSwe2015, PinGorNal2019, KleNot2020, HofKle2023} and, for our purpose, the last-layer Laplace approximation \citep{RitBotBar2018} is an attractive choice to estimate the posterior as it provides a low-cost approximation without gradient update steps. \\
As a preliminary, note that the likelihood of the context set for individual $i = 1, \ldots, N$ is
\begin{align*}
   p(\D_i^{(c)} \mid W_i) = \prod_{j=1}^{T_i}\mathbb{P}(Y_{i,t_{i,j}}^{(c)} = y_{i,{t_{i,j}}}^{(c)} \mid W_i, x_{i,t_{i,j}}^{(c)}).
\end{align*}
To perform Bayesian inference, we specify a prior over the last-layer weights $W_i$. A standard choice is an isotropic matrix Gaussian prior $W_i \sim \MN(0,\sigma_w^2 I_\numclasses, I_\dimlastlayerweights)$ with a prior variance factor $\sigma_w$.

\paragraph*{Laplace approximation}
The core idea of the Laplace approximation is to approximate the posterior $p(W_i \mid \D_i^{(c)})$ with a Gaussian distribution, which is centered at the MAP estimate and whose covariance matches the local curvature of the posterior at that point. Compared to estimating an exact, potentially intractable posterior, the Laplace approximation is computationally cheap and can be fitted fast across many small datasets. 

The Laplace approximation stems from a second-order Taylor approximation of the log-posterior around the MAP estimate. For notational convenience, let $w_i \coloneqq \text{vec}(W_i)$ and its MAP estimate $\widehat{w}_i \coloneqq \text{vec}(\widehat{W}_i)$, where $\text{vec}(M)$ transforms a matrix $M$ into a column vector by stacking the columns of $M$ on top of each other.
Expanding the log-posterior around $\widehat{w}_i$ yields
\begin{align}\label{eq:taylor_approx}
    \log p(w_i \mid \D_i^{(c)}) \approx \log p(\widehat{w}_i \mid \D_i^{(c)}) - \frac{1}{2}(w_i - \widehat{w}_i)^\top H (w_i - \widehat{w}_i),
\end{align}
where $H \in \Reals^{(K \cdot F) \times (K \cdot F)}$ is the Hessian of the negative log-posterior  evaluated at the MAP
\begin{align*}
H &= - \nabla^2_{w_i} \log p(w_i \mid \mathcal{D}_i^{(c)}) \;\Big|_{w_i = \widehat{w}_i} = - \nabla^2_{w_i} 
\Big[ \log p(\mathcal{D}_i^{(c)} \mid w_i) + \log p(w_i) \Big] 
\;\Big|_{w_i = \widehat{w}_i}.
\end{align*}
Note that there is no linear term in Eq.~\eqref{eq:taylor_approx} because we assume the gradient of the log posterior to be zero at the MAP estimate, that is $\nabla_{w_i} \log p(w_i \mid \mathcal D_i^{(c)})\big|_{w_i = \widehat w_i} = 0
$.
The posterior is then multivariate Gaussian with $p(w_i \mid \D_i^{(c)}) = \mathcal{N}(w_i; \widehat{w}_i, H^{-1})$.
The distribution of $W_i$ can be expressed as a matrix Gaussian distribution using Kronecker factors $A_i^{-1}$, $B_i^{-1}$ of the inverse Hessian, i.e., $H_i^{-1} \approx A_i^{-1} \otimes B_i^{-1}$, resulting in
\begin{align*}
    W_i \mid \D_i^{(c)} \sim \MN_{}(\widehat{W}_i, B_i, A_i) \Leftrightarrow \text{vec}(W_i)\mid \D_i^{(c)}  \sim \mathcal{N}(\text{vec}(\widehat{W}_i), A_i \otimes B_i).
\end{align*}
More details on the Kronecker factors can be found in \citet{BotRitBar2017}, and we compute the factors using the \emph{backpack} package by \citet{DanKunHen2019}.
We center the Laplace approximation at the MLE from Eq.~\eqref{eq:mle}, which is equivalent to using an improper Gaussian prior with infinite variance during training. Following standard practice \citep{HenMazDiet2019, KriHeiHen2020}, the prior variance for prediction is subsequently determined using a small fraction of in-sample and out-of-sample data (see Supplementary Material B for more details).

\paragraph*{Posterior predictive distribution}
To predict on the target set, we leverage the fact that the distribution of the pre-activation outputs is a $K$-dimensional Gaussian
\begin{align}\label{eq:dist_activations}
    p(W_i\phi^{(t)}(x_{i,t_{i,j}}^{(t)}) \mid \D_i^{(c)}, x_{i,t_{i,j}}^{(t)}) = \mathcal{N}(W_i \phi^{(t)}(x_{i,t_{i,j}}^{(t)}) ;  \widehat{W}_{i}\phi^{(t)}(x_{i,t_{i,j}}^{(t)}),  (\phi^{(t)}(x_{i,t_{i,j}}^{(t)})^\top A_i \phi^{(t)}(x_{i,t_{i,j}}^{(t)}))B_i).
\end{align} 
This distribution is computationally more efficient to sample from than the $F \times K$-dimensional matrix Gaussian distribution of $W_i$.
We approximate the posterior predictive class probabilities $\mathbb{P}(Y_{i,t_{i,j}}^{(t)} = k \mid x_{i,t_{i,j}}^{(t)}, \D_i^{(c)})$ based on $M$ Monte Carlo samples of the pre-activation outputs $\lbrace(W_i\phi^{(t)}(x_{i,t_{i,j}}^{(t)}))^{[1]}, \ldots,  (W_i\phi^{(t)}(x_{i,t_{i,j}}^{(t)}))^{[M]} \rbrace$ via
\begin{align}\label{eq:mc_llla}
    \hat{p}_{i,{t_{i,j}}}^{\text{MC}}(k) = \frac{1}{M} \sum_{m=1}^M
    a_k\big((W_i \phi^{(t)}(x_{i,t_{i,j}}^{(t)}))^{[m]}\big),
\end{align}
for individual $i=1,\ldots,N$, time points $j=T_i +1,\ldots, T_i + H_i$, and class $k=1,\ldots,K$.
As before, point predictions are obtained by selecting the class with the highest predicted probability $\hat{y}_{i,{t_{i,j}}}^{\text{MC}} = \arg \max_{k \in \{1,\ldots,K\}} \hat{p}_{i,{t_{i,j}}}^{\text{MC}}(k)$.
Algorithm~\ref{alg:inference} summarizes the steps for inference.

\begin{algorithm}[H]
\caption{Predictive inference for the Bayesian meta-learner}\label{alg:inference}
\SetKwInOut{Input}{Input}
\SetKwInOut{Output}{Output}
\Input{Context set $\mathcal{D}_i^{(c)}$, target covariate $x_{i,t_{i,j}}^{(t)}$.}
\Output{Posterior predictive class probabilities $\mathbb{P}(Y_{i,t_{i,j}}^{(t)} = k \mid x_{i,t_{i,j}}^{(t)}, \D_i^{(c)})$.}
Obtain MAP estimate $\widehat{W}_i$ of the last-layer weights from $\D_i^{(c)}$ using Eq.~\eqref{eq:context_encoder} and~\eqref{eq:aggregator}. \\
Evaluate target embeddings at the context covariates $\phi^{(t)}(x_{i,t_{i,j}}^{(c)})$ for $j = 1, \ldots, T_i$. \\
Approximate the posterior $p(W_i \mid \D_i^{(c)})$ with a last-layer Laplace approximation. \\
Obtain $M$ samples $(W_i\phi^{(t)}(x_{i,t_{i,j}}^{(t)}))^{[1]}, \ldots,  (W_i\phi^{(t)}(x_{i,t_{i,j}}^{(t)}))^{[M]}$  using Eq.~\eqref{eq:dist_activations}. \\
Compute $\mathbb{P}(Y_{i,t_{i,j}}^{(t)} = k \mid x_{i,t_{i,j}}^{(t)}, \D_i^{(c)})$ using Eq.~\eqref{eq:mc_llla}.
\end{algorithm}
Last-layer inference only adds a small additional cost of 1--2 seconds per individual on an Apple M1 processor. The computational complexity is $\mathcal{O}(T_i F^2)$ for constructing the Hessian, $\mathcal{O}(K F^3)$ for its inversion, and $\mathcal{O}(K F^2)$ for evaluating posterior variances.

\section{Properties and simulation study}\label{sec:estimation}
We next study the properties of the deterministic meta-learner and its Bayesian counterpart. By building on previous results from single-task learning, we show theoretically that the deterministic meta-learner is overconfident when predicting long-term disease progression, in the sense that nearly all probability mass is assigned to a single class.
As a practical example, consider an individual whose disease scores and MRI volumes were collected at multiple time points between 2005 and 2008, forming $\D_i^{(c)}$. In 2025, a new MRI scan, yielding $\bar{x}_{i,t_{i,T_i + 1}}^{(t)}$ at $t_{i,T_i + 1}=2025$ is recorded, creating the target set $\D_i^{(t)}$. The CDR-SB score is not collected due to time constraints, and the individual seeks a prediction instead. With the 17-year gap between context and target set, the deterministic meta-learner may place nearly all probability mass on a single class, leading to poor calibration and making the model unsuitable for clinical applications.

We verify these theoretical results in a simulation study and demonstrate that the Bayesian inference scheme from Section \ref{sec:pred_inf} mitigates this overconfidence and improves calibration.

\subsection{Properties}
We analyze the limit of the predictive class score $\lim_{t_{i,j} \rightarrow \infty} \mathbb{P}(Y_{i, t_{i,j}}^{(t)} = k \mid \bar{x}_{i,t_{i,j}}^{(t)}, t_{i,j}, \D_i^{(c)})$ when predicting at target time points $t_{i,j}$ that are far from those in the context set, where $j = T_i +1, \ldots, T_i + H_i$ and $k = 1, \ldots, K$, which is equivalent to convergence in distribution. While $t_{i,j} = \infty$ is never observed in practice, we demonstrate on synthetic and real data that the asymptotic distribution is already attained at finite, practically relevant time horizons, such as a few years beyond the time points in the context set.

\subsubsection{Overconfidence with deterministic single-task learning}
We begin by recalling a known result by \citet{HeiAndBit2019} that, under mild assumptions, single-task\footnote{This corresponds to training a model which ignores all context information and predicts the disease score directly from the current MRI image and time, i.e. $\mathbb{P}(Y_{i,t_{i,j}}^{(t)} = k \mid x_{i,t_{i,j}}^{(t)})$.} ReLU classifiers become arbitrarily confident when their inputs are scaled by a sufficiently large factor $\alpha > 0$.
In other words, for inputs of the form $\alpha x$, the predictive class probability tends to one for a single class. 
We then show that this effect persists even when only a subset of the inputs is scaled, as in our use case, where time is scaled, but the MRI volume is not. This yields a new corollary, which we then extend to the meta-learning setting.

Consider a single-task feedforward neural network $g: \mathbb{R}^{J} \rightarrow \mathbb{R}^{K}$ with ReLU activations in the hidden layers. 
Such a network can be expressed as an affine function (a linear function plus a constant vector), on a finite partition of the input space into polytopes $\{Q_l\}_{l=1}^R$ such that $\mathbb{R}^{J}= \cup_{l=1}^R Q_l$. These polytopes are referred to as \emph{linear regions} \citep{AroBasMi2016}, since on each such region the network can be expressed as an affine function. 
For any $x \in Q_l$, the network output can be written as
\begin{align*}
    g\big|_{Q_l}(x) = V_l x + a_l.
\end{align*}
This is useful because it allows for analyzing a linear function within each linear region rather than a nonlinear one on the entire input space. In addition, almost all inputs remain in the same linear region when scaled by a sufficiently large scalar $\alpha > 0$ \cite[Lemma 3.1]{HeiAndBit2019}, so that $V_l$ and $a_l$ remain unchanged. We provide further information on the linear regions in Supplementary Material C.
The overconfidence problem can be formalized by the following theorem.
\begin{theorem}[\citeauthor{HeiAndBit2019}, \citeyear{HeiAndBit2019}]\label{thm:hein}
Let $\mathbb{R}^{J} = \cup_{l=1}^R Q_l$ and $g\big|_{Q_l}(x) = V_l x + a_l$ be the piecewise affine representation of the output of a ReLU network on $Q_l$. Suppose that $V_l$ does not contain any identical rows for all $l = 1, \ldots, R$, then for almost any $x \in \mathbb{R}^{J}$ and any $\epsilon > 0$, there exists an $\alpha > 0$ and a class $k \in \{1, \ldots, K\}$ such that it holds $a_k(g(\alpha x)) \geq 1 - \epsilon$.
Moreover, $\lim_{\alpha \rightarrow \infty} a_k(g(\alpha x)) = 1$.
\end{theorem}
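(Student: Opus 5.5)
The plan is to reduce the statement to an elementary property of the softmax applied to an affine function along a ray. We do not need to know the partition in detail; we only need that a ray eventually settles in one linear region. We use the fact quoted above from \citet[Lemma 3.1]{HeiAndBit2019}: for almost every $x \in \mathbb{R}^J$ there exist $\beta > 0$ and an index $l$ such that $\alpha x \in Q_l$ for all $\alpha \geq \beta$.

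If we wanted to re-derive this, we would argue as follows. Each $Q_l$ is a polytope, that is, a finite intersection of half-spaces. The ray $\{\alpha x : \alpha \geq 0\}$ crosses each bounding hyperplane at most once, unless $x$ lies in one of finitely many hyperplanes through the origin, which is a Lebesgue-null set. Hence the ray is eventually contained in a single region. Excluding this null set is exactly where the qualifier ``almost any $x$'' comes from.

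On that tail we have $g(\alpha x) = \alpha V_l x + a_l$. Write $z = V_l x \in \mathbb{R}^K$ and pick $k \in \argmax_m z_m$. For each $m \neq k$,
\begin{align*}
\frac{a_m(g(\alpha x))}{a_k(g(\alpha x))} = \exp\bigl(\alpha (z_m - z_k) + (a_l)_m - (a_l)_k\bigr).
\end{align*}
If the maximiser of $z$ is unique, every exponent $z_m - z_k$ is strictly negative. Each ratio then tends to $0$ as $\alpha \to \infty$, so
\begin{align*}
a_k(g(\alpha x)) = \Bigl(1 + \sum_{m \neq k} \frac{a_m}{a_k}\Bigr)^{-1} \to 1 .
\end{align*}
This gives the limit statement. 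The existence of an $\alpha$ with $a_k(g(\alpha x)) \geq 1 - \epsilon$ follows directly from the definition of the limit.

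The main obstacle is ensuring that the maximum of $z = V_l x$ is attained at a unique coordinate. Ties $z_m = z_k$ occur exactly when $(V_l^{(m)} - V_l^{(k)})\, x = 0$, where $V_l^{(m)}$ denotes the $m$-th row of $V_l$. The assumption that $V_l$ has no identical rows means each such difference is a nonzero vector, so each tie set is a hyperplane and hence Lebesgue-null. Taking the union over the finitely many pairs $(m,k)$ and regions $l$, and adding the null set from the first step, leaves a null exceptional set. For every $x$ outside it, the argument above applies.
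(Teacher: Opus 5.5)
Your proof is correct and follows the original argument of \citet{HeiAndBit2019}; the paper itself states this theorem without proof and only cites it. As in that argument, you fix the eventual linear region along the ray, use the no-identical-rows assumption to make the maximiser of $V_l x$ unique outside a finite union of hyperplanes, and let the softmax ratios vanish. One minor remark: the null set you exclude when re-deriving the ray lemma is harmless but unnecessary, because the set of $\alpha$ with $\alpha x$ in a given polytope is an interval, and finitely many intervals covering $[0,\infty)$ must include an unbounded one (Hein et al.\ obtain the lemma for every $x$ from convexity of the regions).
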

This result can be extended to partially scaled inputs via the following corollary.
\begin{corollary}\label{cor:overconfidence}
Let the assumptions of Theorem~\ref{thm:hein} hold and let in addition $x \in \mathbb{R}^{J}$ be partitioned into two disjoint subsets $x = [x_{1:I}^\top, x_{(I+1):J}^\top]^\top$ for some $I \in \{1, \ldots ,J-1\}$. Then for almost any $x \in \mathbb{R}^{J}$ and any $\epsilon > 0$, there exists an $\alpha > 0$ and a class $k \in \{1, \ldots, K\}$ such that it holds $ a_k(g([\alpha x_{1:I}^\top, x_{(I+1):(J)}^\top]^\top)) \geq 1 - \epsilon$.
Moreover, $\lim_{\alpha \rightarrow \infty} a_k(g([\alpha x_{1:I}^\top,  x_{(I+1):(J)}^\top]^\top)) = 1$.
\end{corollary}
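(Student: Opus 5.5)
The plan is to reduce the partially scaled input to an affine ray and then repeat the argument behind Theorem~\ref{thm:hein} along that ray. Write the scaled input as
\[
[\alpha x_{1:I}^\top, x_{(I+1):J}^\top]^\top = \alpha z + c,
\qquad z = [x_{1:I}^\top, 0^\top]^\top,\quad c = [0^\top, x_{(I+1):J}^\top]^\top .
\]
So instead of the ray $\{\alpha x\}$ through the origin used by \citet{HeiAndBit2019}, we study the shifted ray $\{c + \alpha z : \alpha > 0\}$.

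The first step is to show that this ray eventually stays in a single linear region. Each $Q_l$ is a convex polytope, so $\{\alpha \ge 0 : c+\alpha z \in Q_l\}$ is an interval. There are finitely many regions and they cover $\mathbb{R}^J$, so exactly one of these intervals is unbounded, say $[\beta,\infty)$ for region $Q_l$ (ties only on lower-dimensional boundaries). Hence there is $\beta>0$ such that for all $\alpha \ge \beta$,
\[
g(c+\alpha z) = V_l(c+\alpha z) + a_l = \alpha\, V_l z + (V_l c + a_l).
\]
This is the analogue of \citet[Lemma 3.1]{HeiAndBit2019}. The argument does not actually need the ray to pass through the origin, and no measure-zero exclusion is needed for this step.

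The second step is the softmax limit. Let $u = V_l z$ and $b = V_l c + a_l$. Suppose $u$ has a unique maximal entry $u_k$, and set $\delta = u_k - \max_{m\ne k} u_m > 0$. Then
\[
a_k(\alpha u + b) = \Bigl(1 + \sum_{m \neq k} \exp\bigl(\alpha(u_m-u_k) + b_m - b_k\bigr)\Bigr)^{-1} \ge \bigl(1 + (K-1)e^{-\alpha\delta + \max_{m}|b_m-b_k|}\bigr)^{-1}.
\]
The right-hand side tends to $1$ as $\alpha \to \infty$. This gives both the $\epsilon$-statement, by choosing $\alpha \ge \beta$ large enough, and the limit statement.

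The main obstacle is guaranteeing that $u = V_l z$ has a unique maximum for almost every $x$. Note that $u$ depends only on $x_{1:I}$ through the submatrix $V_l^{(1:I)}$ formed by the first $I$ columns of $V_l$. For each pair $k\neq m$, the tie set $\{x_{1:I} : (V_{l,k\cdot}^{(1:I)} - V_{l,m\cdot}^{(1:I)})x_{1:I} = 0\}$ is a hyperplane, hence Lebesgue-null, provided the two restricted rows differ. Taking the finite union over pairs and regions then yields the ``almost any $x$'' conclusion; the $x_{(I+1):J}$ coordinates are unconstrained, so the exceptional set is still null in $\mathbb{R}^J$.

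So this step requires the rows of $V_l$ to be distinct already on the scaled coordinates. The bare assumption of Theorem~\ref{thm:hein} (no identical full rows) does not imply this. If two rows agree on the first $I$ columns, then $u_k = u_m$ for every $z$, the logit gap stays bounded, and confidence need not tend to one. I would therefore state the no-identical-rows hypothesis for the column-restricted matrices $V_l^{(1:I)}$ and note explicitly that this is how the assumptions of Theorem~\ref{thm:hein} are to be read in the corollary. For the paper's use case, $x_{1:I}$ is the time coordinate with $I=1$, so the condition reduces to pairwise distinct entries of the time column of each $V_l$.
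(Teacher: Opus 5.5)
Your proof is correct for the corollary under your column-restricted hypothesis. You are also right that this strengthening is needed: as literally stated, the corollary is false.

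Take $J=2$, $I=1$, $K=2$ and $g(x)=(x_2,0)^\top$, realized with ReLUs as $\mathrm{ReLU}(x_2)-\mathrm{ReLU}(-x_2)$ in the first output. On both linear regions $V_l$ has the distinct rows $(0,1)$ and $(0,0)$, so the hypothesis of Theorem~\ref{thm:hein} holds. Yet $a_1(g([\alpha x_1,x_2]^\top))=e^{x_2}/(1+e^{x_2})$ does not depend on $\alpha$, so for every $x$ the conclusion fails once $\epsilon<1/(1+e^{|x_2|})$. More generally, any network with distinct rows in each $V_l$ whose logits do not depend on the scaled coordinates is a counterexample. You can cite this instead of saying only that confidence ``need not'' tend to one.

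Your route also differs from the paper's. The paper does not re-run the ray argument. It absorbs $W_1^{(J-I)}x_{(I+1):J}$ into the first-layer bias, regards the result as a ReLU network $g^*$ of $x_{1:I}$ alone, and applies Theorem~\ref{thm:hein} to $g^*(\alpha x_{1:I})$ as a black box. That is shorter, but it has two weaknesses.
\begin{itemize}
\item The affine pieces of $g^*$ have linear parts $V_l^{(1:I)}$, so the black-box step requires exactly your condition that these restricted matrices have no identical rows. The paper never checks it, and the example above shows it does not follow from the stated assumption.
\item The reduction yields a null exceptional set in $x_{1:I}$ separately for each fixed $x_{(I+1):J}$. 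Reaching ``almost every $x\in\mathbb{R}^J$'' therefore needs a measurability and Fubini step that the paper omits.
\end{itemize}
Your direct argument along $c+\alpha z$ avoids both issues. It shows precisely where row-distinctness enters, and it exhibits the exceptional set as a finite union of hyperplanes in $\mathbb{R}^J$.

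One small correction to your first step: at least one, not exactly one, of the intervals is unbounded. Several can be unbounded if the ray tail runs inside a shared face. Any such $l$ works, and your union over all $l$ in the null-set argument already covers whichever you pick.
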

\begin{proof}[Proof]
Consider the pre-activation output $z(\cdot)$ of a neuron in the first layer of $g$
\begin{align*}
    z([\alpha x_{1:I}^\top, x_{(I+1):J}^\top]^\top)
    & = [W_1^{(I)}; W_1^{(J - I)}] [\alpha x_{1:I}^\top; x_{(I+1):(J)}^\top]^\top + b \\
    & = W_1^{(I)} (\alpha x_{1:I}) + \underbrace{W_1^{(J - I)} x_{(I+1):(J)} + b}_{=: b_*},
\end{align*}
where $W_1 = [W_1^{(I)}; W_1^{(J  - I)}]$ are the first layer weights partitioned in the same way as $x$, and $b$ is the bias. The term $W_1^{(J -I)} x_{(I+1):(J)}$  may be regarded as an additional bias term $b_*$, because it is independent of $\alpha$.
Thus, for partially scaled inputs $[\alpha x_{1:I}; x_{(I+1):(J)}]^\top$, we can view $g$ as another neural network $g^\ast$ with the same parameters and architecture as $g$, except in the first layer, where $g^*$ only takes $x_{1:I}$ as input and has first layer weights $W_1^{(I)}$ and bias $b_*$. According to Theorem~\ref{thm:hein}, the network $g^*(\alpha x_{1:I})$ produces overconfidence since its inputs are fully scaled. Thus, the same must hold for $g([\alpha x_{1:I}; x_{(I+1):(J)}]^\top)$ and Corollary~\ref{cor:overconfidence} follows.
\end{proof}

\subsubsection{Overconfidence with deterministic meta-learning}
Theorem~\ref{thm:hein} and Corollary~\ref{cor:overconfidence} carry over to the deterministic meta-learner given a context set $\D^{(c)}$. The context encoder $f_\vartheta^{(c)}$ adapts the last-layer weight matrix of the target encoder $f_\psi^{(t)}$, so that the linear regions of the target encoder depend on  $\D^{(c)}$. On a linear region  $Q_l(\D^{(c)})$ and at an arbitrary covariate $x_t$, the target encoder can be expressed as
\begin{align*}
    f_\psi^{(t)}\big|_{Q_l(\D^{(c)})} (x_t) = V_l({\D^{(c)}})x_t + a_l(\D^{(c)}),
\end{align*}
where the coefficients of the affine representation are fixed for a given context set. The context set merely changes the location of the linear regions $\{Q_l\}_{l=1}^R$ and the value of $\alpha$ at which the asymptotic regime is attained. Relating this to our setting, partitioning $x_t = [ \bar{x}_t^\top, t]^\top$ and scaling the time $\alpha \cdot t$ leads to overconfidence through the same reasoning as previously discussed. More generally, this result also extends to CNPs, a state-of-the-art meta-learning architecture, which can be interpreted as a hypernetwork in which the bias of the first layer of the target encoder is predicted by the context encoder.

\subsection{Simulation study}\label{subsec:sim}
We now verify our theoretical results in a simulation study using synthetic MRI volumes and disease trajectories. Because MRI volumes with cerebral atrophy are difficult to generate, we instead consider a neurodegenerative disease in which progression is driven by lesions, such as in multiple sclerosis. Lesions represent damaged areas of brain tissue and are clearly visible as bright regions in T1-weighted MRI volumes. 

\paragraph*{Data generation}
We use a continuous Gaussian process (GP) to model a latent disease state $z_{i,{t_{i,j}}}$ for an individual $i$ at a specific time point $t_{i,j}  \in \Reals$ and latent lesion size $l_{i,t_{i,j}} \in \Reals_0^{+}$. The continuous latent state $z_{i, t_{i,j}}$ is discretized to a discrete disease score $y_{i,{t_{i,j}}} \in \{1, \ldots, K\}$. We use $K = 10$ classes here, as opposed to the $K = 3$ in the real-world application, to demonstrate that, with sufficient training data, the model can accurately capture disease progression across a wide range of severity levels. 
The lesion size evolves as a random walk with a positive drift $d$ over time, as well as flare-ups and remissions
\begin{align*}
    l_{i,t_{i,j}} = l_{i,t_{i,j - 1}} + d + F_{i, t_{i,j}} - R_{i, t_{i,j}} + \varepsilon_{i, t_{i,j}}, 
\end{align*}
where $\varepsilon_{i, t_{i,j}} \sim \mathcal{N}(0, 0.01)$ is i.i.d.~observational noise, the flare-up term $F_{i,t_{i,j}}\sim\mathcal{U}[0,2.5]$ with probability $0.05$ and $0$ otherwise, and the remission term
$R_{i,t_{i,j}}\sim\mathcal{U}[0,1.5]$ with probability $0.02$ and $0$ otherwise. To ensure $l_{i, t_{i,j}} \Reals_0^+$, negative values are set to zero. Example trajectories can be found in Supplementary Material D. 
For each lesion size, we generate a synthetic MRI image by inserting a bright lesion of area $l_{i,t_{i,j}}$ into a pre-generated background image. 
Figure~\ref{fig:gp_with_mri} depicts a sample of the GP over lesion size and time (left) and examples of synthetic MRI volume for several lesion sizes (right).
\begin{figure}[H]
    \centering
    \begin{minipage}[t]{1\linewidth}  
        \centering
        \includegraphics[width=.7\linewidth]{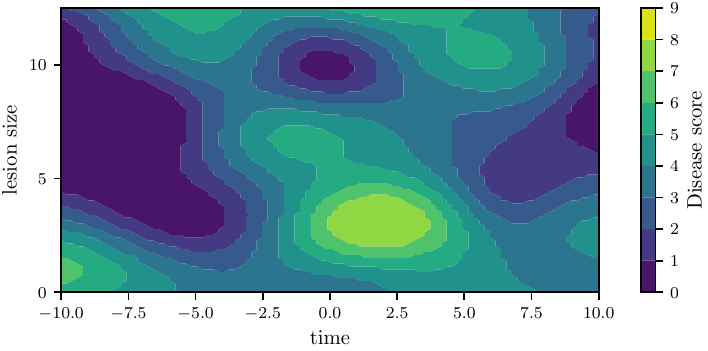}%
        \hspace{0.5cm} 
        \raisebox{.73cm}{\includegraphics[width=.15\linewidth]{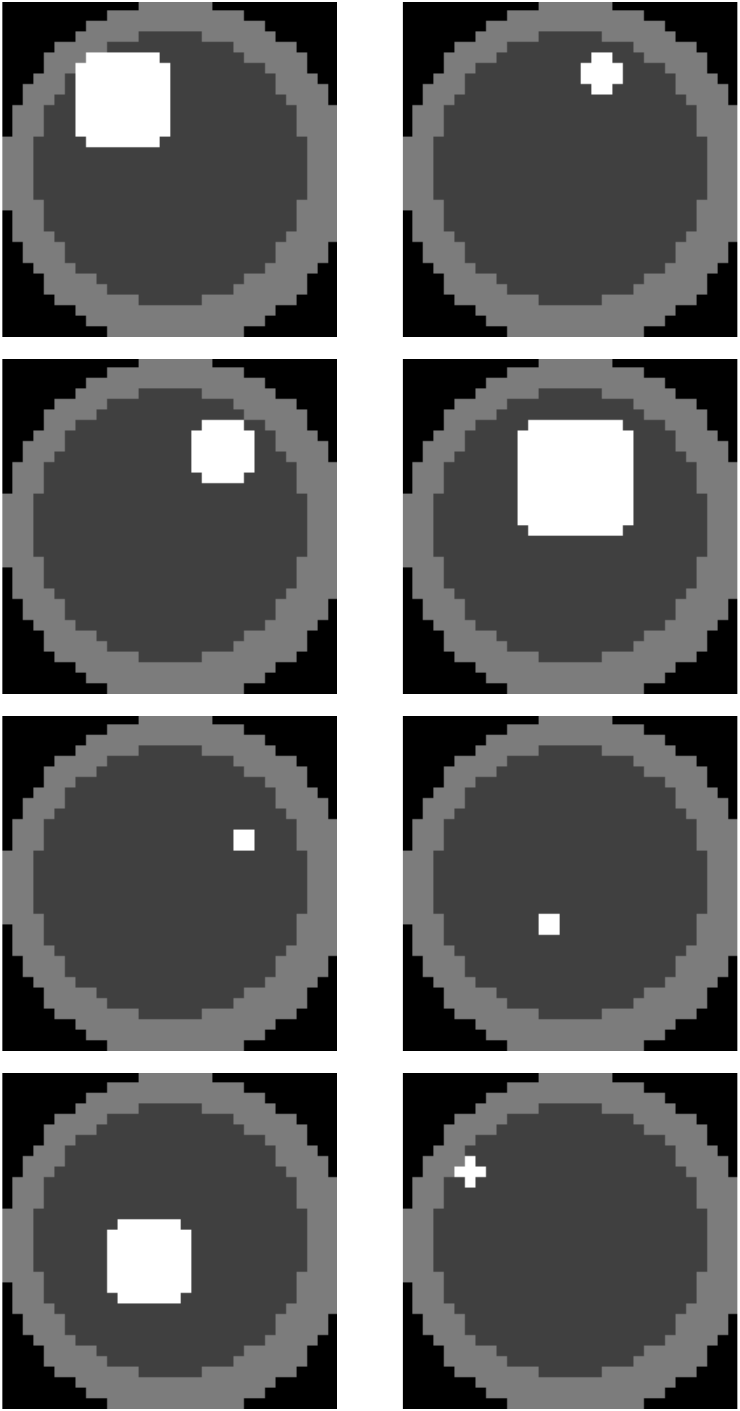}}%
    \end{minipage}
    \caption{Discretized sample from a Gaussian process used to model discrete disease score over time and lesion size and resulting synthetic MRI images (right).}
    \label{fig:gp_with_mri}
\end{figure}
The GP kernel is a product of two squared exponential kernels, corresponding to lesion size and time, respectively. A non-separable kernel could also be used to introduce a positive correlation between lesion size and time. However, our setting accommodates a wider range of progression patterns, in which lesions can grow, stabilize, or vanish over time. The disease trajectory of an individual is obtained by sampling a function from the GP and discretizing the latent disease states.

In line with the real-world setting from the ADNI data, we split the time into an in-sample period (April 2019--June 2020) and an out-of-sample period (January 2015--April 2019 and May 2020--December 2024). 

\paragraph*{Training, prediction and evaluation}
We train the deterministic and Bayesian meta-learners to predict the disease score from the synthetic MRI volumes and time points within the in-sample period. During both training and prediction, we condition on a random number of context points drawn uniformly from the interval $[10,20]$ within the in-sample period. The predictive performance is evaluated both on in-sample and out-of-sample data. Additional information on data generation, architecture, episodic sampling scheme, and training details can be found in Supplementary Material D.

We evaluate both models in terms of accuracy, which is defined as the proportion of correctly classified observations. 
In addition, we evaluate the quality of the predictive probabilities via the average negative log-likelihood (NLL), Brier score, and expected calibration error \citep[ECE;][]{NaeCooHau2015}. For a test set with $N_{\text{val}}$ individuals, the average negative log-likelihood is $\text{NLL} =  -\frac{1}{N_{\text{val}}} \sum_{i=1}^{N_{\text{val}}} \frac{1}{T_i + H_i} \sum_{j=1}^{T_i + H_i} \sum_{k=1}^{K} y_{i,t_{i,j}}^{(k)} \log \hat{p}_{i,t_{i,j}}^{(k)}$,
where $y_{i,{t_{i,j}}}^{(k)} = 1$ if the observation belongs to class $k$ and zero otherwise, and $\hat{p}_{i,t_{i,j}}^{(k)} \in \{\hat{p}_{i,t_{i,j}}^{\text{Det}}(k), \hat{p}_{i,t_{i,j}}^{\text{MC}}(k)\}$ denotes the predicted  probability of class $k$ at observation $(i,j)$.

The Brier score measures the accuracy of the predictive uncertainty by the mean squared difference between the predicted probability and the actual outcome. The Brier score is defined as $\text{Brier} = \frac{1}{N_{\text{val}}} \sum_{i=1}^{N_{\text{val}}}\frac{1}{T_i+H_i}
\sum_{j=1}^{T_i+H_i} \sum_{k=1}^K \left(y_{i,{t_{i,j}}}^{(k)}  - \hat p_{i,{t_{i,j}}}^{(k)}\right)^2$.
The worst value of the Brier score is two, and a perfectly accurate probabilistic prediction produces a Brier score of zero.

Calibration is assessed by the $\text{ECE} = \sum_{m=1}^{M} \frac{|B_m|}{\sum_{i=1}^{N_{\text{val}}} (T_i+H_i)} \left| {\text{acc}(B_m)} - {\text{conf}(B_m)}\right|$,
where $\text{acc}(B_m) = \frac{1}{|B_m|}\sum_{(i,j)\in B_m}\mathds{1}\{\hat{y}_{i,t_{i,j}} = y_{i,t_{i,j}}\}$ denotes the average accuracy and  $\text{conf}(B_m) = \frac{1}{|B_m|}\sum_{(i,j)\in B_m}\max_k \hat{p}_{i,t_{i,j}}^{(k)}$ the average confidence for observations in bin $B_m$ with $m = 1, \ldots, M$.
We set $M = 15$, which allows for sensitivity to miscalibration throughout the entire confidence spectrum. The ECE measures the average deviation between the model's confidence in its prediction and the empirical accuracy. A perfectly calibrated model would produce an ECE of zero, meaning that for the observations within each bin $B_m \in [\alpha_m^{\text{low}},\alpha_m^{\text{high}}]$ the empirical accuracy of the predictions equals the average predicted confidence. Vice versa, the maximum ECE is one.

Table~\ref{tab:synth_benchmark} reports the predictive performance on disease trajectories of $1{,}000$ individuals averaged over 10 independent runs each using a different random seed for initialization. 
\begin{table}[ht!]
\centering
  \caption{Predictive performance on in-sample and out-of-sample data. All values are averages and standard deviations over 10 trials. Bold indicates the best values within a data regime, accounting for one standard deviation interval around the mean.}\label{tab:synth_benchmark}
\begin{tabular}{llcccc}
    \toprule
& \textbf{Model} & \textbf{Acc.} (\%, $\uparrow$) &\textbf{NLL} ($\downarrow$) & \textbf{Brier}($\downarrow$) & \textbf{ECE} ($\downarrow$) \\
    \midrule
\multirow{2}{*}{\textbf{In-sample}} & Meta-learner & 78.08 {\scriptsize$\pm$ 0.31} & 0.61 {\scriptsize$\pm$ 0.01} & 0.32 {\scriptsize$\pm$ 0.00} & 0.03 {\scriptsize$\pm$ 0.00} \\
 & Meta-learner + LLI & 78.07 {\scriptsize$\pm$ 0.31} & 0.61 {\scriptsize$\pm$ 0.01} & 0.32 {\scriptsize$\pm$ 0.00} & 0.04 {\scriptsize$\pm$ 0.00} \\
    \midrule
\multirow{2}{*}{\textbf{Out-of-sample}} & Meta-learner & 13.96 {\scriptsize$\pm$ 0.87} & 8.61 {\scriptsize$\pm$ 0.42} & 1.44 {\scriptsize$\pm$ 0.03} & 0.64 {\scriptsize$\pm$ 0.02} \\
 & Meta-learner + LLI & 13.86 {\scriptsize$\pm$ 0.86} & \textbf{5.45} {\scriptsize$\pm$ 0.56} & \textbf{1.15} {\scriptsize$\pm$ 0.03} & \textbf{0.39} {\scriptsize$\pm$ 0.02} \\
    \bottomrule
\end{tabular}
\end{table}
Both learners have almost identical performance close to the context points (in-sample). Far from the context points (out-of-sample), the accuracy strongly drops by about $65$ percentage points. Consistent with our theoretical results, the deterministic meta-learner is severely miscalibrated with an ECE of $0.64$, while the Bayesian meta-learner improves calibration substantially with an ECE of $0.39$. 

The average NLL and Brier score of the deterministic meta-learner also deteriorate far from the context points, but to a lesser extent for the Bayesian meta-learner. Figure~\ref{fig:synth_example} illustrates the underlying reason.
\begin{figure}[H]
    \centering
    \includegraphics[width=1\linewidth]{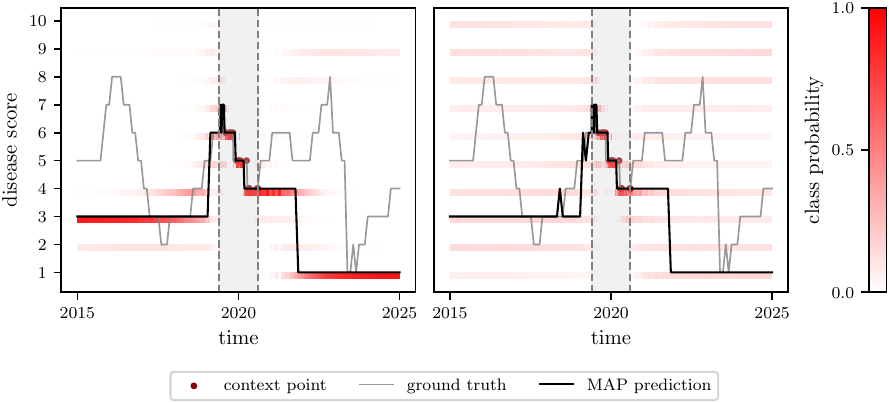}
    \caption{MAP prediction (\pred) and predictive uncertainty (red) of a deterministic meta-learner (left) and Bayesian meta-learner (right) for an observed disease trajectory (\groundtruth) close to the context points (\redcircle; gray shaded area) and extrapolation regime.  The deterministic meta-learner is overconfident (\score) far from the context points, while our Bayesian meta-learner mitigates overconfidence by assigning probability mass more uniformly over disease scores (\scorelight).}
    \label{fig:synth_example}
\end{figure}
The deterministic meta-learner (left) concentrates nearly all mass on a single class far away from the context points. The Bayesian meta-learner (right), on the other hand, distributes probability mass more evenly, so that the observed trajectory is covered with non-zero probability.

\section{Application to disease trajectory modeling}\label{sec:application}
We now evaluate the predictive accuracy and calibration of our proposed Bayesian meta-learner on real-world trajectories from the ADNI database. Following the simulation setup of Section \ref{subsec:sim}, we train the models on a restricted time window and then assess their ability to predict disease scores at future time points using the respective MRI volumes. The training, validation, and test splits are described in Table~\ref{tab:train_test_split}.

\subsection{Benchmarks, preprocessing and evaluation metrics}
We compare the following models:
\begin{itemize}
        \item \textbf{Naive learner}: Classifier that always predicts the majority class (NC) irrespective of the covariates.
        \item \textbf{Single-task learner}: Feedforward neural network that predicts the disease score $y_{i, t_{i,j}}^{(t)}$ from $x_{i,t_{i,j}}^{(t)}$, without using the individual's observed disease trajectory.
        \item \textbf{Single-task learner + LLI}: The previous single-task learner with last-layer inference using a Laplace approximation based on the entire context data $\D^{(c)}$.
        \item \textbf{Meta-learner}: Deterministic meta-learner with the architecture from Section~\ref{sec:architecture} with inference from Section~\ref{sec:det_pred}.
        \item \textbf{Meta-learner + LLI}: Bayesian meta-learner with the architecture from Section~\ref{sec:architecture} and last-layer inference from Section~\ref{sec:bayes_pred}.
    \end{itemize}
All models except the naive learner are feedforward neural networks with ReLU activations in all hidden layers and implemented in PyTorch \citep{PasGroMas2019}.

\paragraph*{Preprocessing and training details}
All MRI volumes are skull-stripped, intensity-normalized, and projected to a standard coordinate space using the \emph{neural-preprocessing} package \citep{HeWanSab2023}. The feature embeddings $\bar{x}_{i, t_{i,j}}$ are extracted with SAM-Med3D \citep{WanGuoYe2025}, a general-purpose foundation model for 3D medical imaging data. To prevent data leakage across training, validation, and test sets, we exclude recent 3D foundation models that have been trained directly on ADNI \citep{TakGarCha2024, KacSzeNic2025, BarBruDuf2024}.
We convert the SAM-Med3D embeddings with dimension $384 \times 8 \times 8 \times 8$ to a $64$-dimensional vector using a convolutional neural network. This network is trained jointly with the single-task learner, and, once trained, it is used by all other models. Further training details are provided in Supplementary Material D.

\paragraph*{Evaluation} To evaluate the accuracy of point predictions, we compute the macro-F1 score, denoted by mF1. The macro-F1 score is an equally weighted average of the class-specific F1 scores denoted by $\text{F1}_k$ for $k=1,\ldots,K$
\begin{align*}
   \text{mF1} = \frac{1}{K}\sum_{k=1}^K \text{F1}_k,\quad\mbox{ where }\quad \text{F1}_k = \frac{2 \cdot \text{precision}_k \cdot \text{recall}_k}{\text{precision}_k +\text{recall}_k}, 
\end{align*}
where $\text{precision}_k$ measures the proportion of predicted class $k$ examples that actually belong to class $k$, and $\text{recall}_k$ measures the proportion of the true class $k$ examples that are correctly predicted. It ranges from 0 to 1, where 0 indicates the worst possible performance, and 1 corresponds to perfect prediction. The macro F1 score is robust to class imbalance, which is particularly important for the ADNI dataset, where individuals with AD are underrepresented despite being the primary group of interest. 

To evaluate predictive uncertainty, we employ the average NLL, the Brier score, and the ECE as described in Section~\ref{subsec:sim}. Note that the naive learner does not produce a predictive distribution and will therefore only be included in the comparison of the macro F1 score. 
Throughout, we only evaluate on observations that have at least one previous observation in the in-sample range (ADNI1; 2006--2008). For the meta-learners, we condition on all prior observations of the same individual within this range. 

\subsection{Results}

\paragraph*{In-sample}
Table~\ref{tab:adni_id_benchmark} reports the performance for test observations within the training window (ADNI1; 2006--2008).
\begin{table}[ht!]
  \centering
  \caption{Predictive performance for the in-sample test set (ADNI1; 2006--2008)}\label{tab:adni_id_benchmark}
  \begin{tabular}{lccccccc}
    \toprule
    \textbf{Model} & \textbf{mF1} (\%, $\uparrow)$ & \textbf{NLL} $(\downarrow)$ & \textbf{Brier} $(\downarrow)$ & \textbf{ECE} $(\downarrow)$ \\
    \midrule
    Naive learner &    19.60 & - & - & - \\
    Single-task learner &  48.58 & 1.07 &  0.62  &  \textbf{0.10}\\
    Single-task learner + LLI &  48.58 & \textbf{0.88} &  0.62  &  \textbf{0.10}\\
    Meta-learner (ours) &  \textbf{50.82}  & 0.91 & \textbf{0.53} &  \textbf{0.10} \\
    Meta-learner + LLI (ours)   &  \textbf{50.82}  & 0.91 & \textbf{0.53} &  0.11 \\
    \bottomrule
  \end{tabular}
\end{table}
Performance is generally similar across all models and metrics, indicating that all learners can capture the disease trajectories within the training window well. Only the naive learner, which always predicts the majority class, performs poorly. Since the temporal range of the in-sample data is short and the disease scores do not vary much within this timeframe, including the historical disease trajectory with the meta-learner yields only a modest improvement in the F1- and Brier scores.

\paragraph*{Out-of-sample}
Table~\ref{tab:adni34_benchmark} reports the predictive performance for the years 2016--2022 (ADNI3) and 2023--2025 (ADNI4). The naive learner is omitted, as it is uninformative for out-of-distribution prediction and can occasionally appear competitive due to class imbalance. As expected, predictive performance worsens for all models when predicting beyond the training window, but the meta-learners perform substantially better than the single-task learners. Consistent with our theoretical results, the single-task learner is severely miscalibrated with ECEs of $0.52$ (ADNI3) and $0.55$ (ADNI4). Applying last-layer inference improves calibration by reducing the ECE to $0.38$ and $0.34$, respectively. The deterministic meta-learner is better calibrated than the single-task models with ECEs of $0.22$ (ADNI3) and $0.28$ (ADNI4). Overall, our
Bayesian meta-learner achieves the best calibration, reaching ECE values of $0.18$ (ADNI3, ADNI4), which are only about $0.08$ above those of the in-sample test set. 

\begin{table}[ht!]
  \centering
  \caption{Predictive performance for the out-of-sample test sets (ADNI3 and ADNI4).}\label{tab:adni34_benchmark}
  \begin{tabular}{lcccc}
    \toprule
     & \multicolumn{4}{c}{\textbf{ADNI3 (2016-2022)}}  \\
     \midrule
    \textbf{Model} & \textbf{mF1} (\%, $\uparrow)$ & \textbf{NLL} $(\downarrow)$ & \textbf{Brier} $(\downarrow)$ & \textbf{ECE} $(\downarrow)$ \\
    \midrule
    Single-task learner & 6.03  & 1.51  & 1.00  & 0.52  \\
    Single-task learner + LLI   & 5.89 & 1.18 & 0.78  & 0.38  \\
    Meta-learner (ours) & \textbf{27.10}  & \textbf{0.74} & \textbf{0.44} & 0.22  \\
    Meta-learner + LLI (ours)  & \textbf{27.10} & \textbf{0.74}  & \textbf{0.44} & \textbf{0.18}  \\
     \\
    & \multicolumn{4}{c}{\textbf{ADNI4 (2022-2025)}}  \\
    \midrule
     & \textbf{mF1} (\%, $\uparrow)$ & \textbf{NLL} $(\downarrow)$ & \textbf{Brier} $(\downarrow)$ & \textbf{ECE} $(\downarrow)$ \\
    \midrule
    Single-task learner &  5.56 & 1.67  & 1.16  & 0.55 \\
    Single-task learner + LLI   &  6.06 & 1.12 & 0.77 & 0.34 \\
    Meta-learner (ours)  & \textbf{25.93} & \textbf{0.79} & \textbf{0.46} & 0.28\\
    Meta-learner + LLI (ours)   & \textbf{25.93} & \textbf{0.79}  & 0.47 & \textbf{0.18} \\
    \bottomrule
  \end{tabular}
\end{table}

Overall, the meta-learners achieve better prediction accuracy and calibration by accounting for the historical disease trajectory. 
In particular, our proposed Bayesian meta-learner with last-layer inference yields an additional improvement in calibration, producing more reliable uncertainty estimates even when predicting far into the future. 

\section{Discussion}\label{sec:discussion}
We have proposed a novel Bayesian meta-learner combined with last-layer inference to model the progression of AD based on irregularly sampled, variable-length MRI time series and disease severity scores.
In an extensive benchmark study on simulated and real-world data, the Bayesian meta-learner achieves improved generalization and uncertainty quantification compared to its deterministic counterpart and single-task models, in particular when predicting into the future and outside the range of observed data.

Regarding Bayesian inference, future research could investigate performing last-layer inference with count models (e.g., Poisson or negative binomial regression), as they account for the natural ordering of discrete disease scores. The Poisson regression model can be implemented in the last layer by predicting the rate parameter of the Poisson distribution as the final output using an exponential function as the activation function. Similarly, for the negative binomial regression, the mean and dispersion parameters can be predicted with exponential activations. However, this can lead to unstable training, because both the activation function itself and its derivative are exponential, which may cause exploding gradients (i.e., a large increase in the norm of the gradient). Potential solutions include gradient clipping or replacing the exponential with a softplus activation \citep{DugBenBel2000}, though the latter breaks the exact correspondence with the canonical formulation of the original regression models and complicates inference.
An alternative is to model the disease scores in the Bayesian meta-learner using a Dirichlet prior, which results in a categorical distribution for the disease scores and admits closed-form training and inference \citep{MalGal2018, HarWilSno2023}.

Throughout the paper, we have focused on the case in which the MRI volumes for the target set are available. An important extension for practical use is to predict the entire future disease trajectory from the historical disease trajectory, without requiring access to future MRI volumes. This can be achieved by generating MRI volumes, for example, with diffusion models for MRI \citep{KhaMulTay2023}.

Finally, further modifications for the architecture remain to be explored. For instance, replacing the mean aggregator with attention-based aggregation \citep{KimMniSch2019} could allow the model to better capture dependencies between context points when learning the last-layer weight matrix. In addition, more extensive adaptation in earlier layers may be beneficial in settings with longer or more complex disease progression. Our results, however, together with those from similar meta-learning architectures such as CNAPs and its Bayesian counterpart VERSA (\citealp{GorBronBau2019}), suggest that adapting only the last-layer weights provides sufficient flexibility for conditioning on the historical disease trajectory.

\vspace{10pt}

\textbf{Acknowledgments:} The authors acknowledge funding by the Deutsche Forschungsgemeinschaft (DFG, German Research Foundation) through the grant KL3037/8-1 (KI-FOR 5363 DeSBi) and the Emmy Noether grant KL 3037/1-1. The authors thank Giovanni Roman\`{o} and Dongjae Son for proofreading and helpful comments on an earlier version of this manuscript.

Data collection and sharing for the Alzheimer's Disease Neuroimaging Initiative (ADNI) is funded by the National Institute on Aging (National Institutes of Health Grant U19AG024904). The grantee organization is the Northern California Institute for Research and Education. In the past, ADNI has also received funding from the National Institute of Biomedical Imaging and Bioengineering, the Canadian Institutes of Health Research, and private sector contributions through the Foundation for the National Institutes of Health (FNIH) including generous contributions from the following: AbbVie, Alzheimer’s Association; Alzheimer’s Drug Discovery Foundation; Araclon Biotech; BioClinica, Inc.; Biogen; BristolMyers Squibb Company; CereSpir, Inc.; Cogstate; Eisai Inc.; Elan Pharmaceuticals, Inc.; Eli Lilly and Company; EuroImmun; F. Hoffmann-La Roche Ltd and its affiliated company Genentech, Inc.; Fujirebio; GE Healthcare; IXICO Ltd.; Janssen Alzheimer Immunotherapy Research \& Development, LLC.; Johnson \& Johnson Pharmaceutical Research \& Development LLC.; Lumosity; Lundbeck; Merck \& Co., Inc.; Meso Scale Diagnostics, LLC.; NeuroRx Research; Neurotrack Technologies; Novartis Pharmaceuticals Corporation; Pfizer Inc.; Piramal Imaging; Servier; Takeda Pharmaceutical Company; and Transition Therapeutics.

\bibliography{references}

\end{document}

%% file: definitions.tex
\DeclareMathOperator*{\argmin}{arg\,min}
\mathchardef\perp="323F

\newcommand{\numclasses}{K} 
\newcommand{\dimlastlayerweights}{F} 
\newcommand{\dimimg}{D}  
\newcommand{\MN}{\mathcal{MN}} 

\newcommand{\D}{\mathcal{D}}

\newcommand{\argmax}{\operatornamewithlimits{arg \, max}}

\newcommand{\Reals}{\mathbb{R}}

\newcommand{\groundtruth}{{\color{gray!80}\rule[0.5ex]{0.3cm}{1pt}}}
\newcommand{\pred}{\rule[0.5ex]{0.3cm}{1.5pt}}
\newcommand{\score}{{\color{red}\rule[-0ex]{0.5cm}{6pt}}}
\newcommand{\scorelight}{{\color{red!15}\rule[-0ex]{0.5cm}{6pt}}}
\definecolor{darkred}{RGB}{139,0,0}
\newcommand{\redcircle}{{\color{darkred}\raisebox{0.ex}{\large$\bullet$}}}